\pgfplotsset{compat=1.18} 
\newcommand{\loss}{\mathcal{L}}
\newcommand{\pder}[2]{\frac{\partial#1}{\partial#2}}
\newcommand{\pdd}[2]{\frac{{\partial}^2#1}{{\partial#2}^2}}
\newcommand{\x}{\mathbf{x}}
\newcommand{\G}{\mathcal{G}}
\newcommand{\numpct}[1]{\pgfmathparse{#1*100}\num[scientific-notation=false, round-mode=figures, round-precision=2]{\pgfmathresult}}
\newtheorem{theorem}{Theorem}
\newtheorem{corollary}{Corollary}[theorem]
\begin{document}

\title{Speeding up and reducing memory usage for scientific machine learning via mixed precision}

\author[1,2,*]{Joel Hayford}
\author[3,*]{Jacob Goldman-Wetzler}
\author[4,*]{Eric Wang}
\author[1,$\dagger$]{Lu Lu}
\affil[1]{Department of Statistics and Data Science, Yale University, New Haven, CT 06511, USA}
\affil[2]{Department of Chemical and Biomolecular Engineering, University of Pennsylvania, Philadelphia, PA 19104, USA}
\affil[3]{Hastings High School, Hastings-on-Hudson, NY 10706, USA}
\affil[4]{WW-P High School North, Plainsboro, NJ 08536, USA}
\affil[*]{These authors contributed equally to this work.}
\affil[$\dagger$]{Corresponding author. Email: lu.lu@yale.edu}

\date{}

\maketitle

\begin{abstract}
Scientific machine learning (SciML) has emerged as a versatile approach to address complex computational science and engineering problems. Within this field, physics-informed neural networks (PINNs) and deep operator networks (DeepONets) stand out as the leading techniques for solving partial differential equations by incorporating both physical equations and experimental data. However, training PINNs and DeepONets requires significant computational resources, including long computational times and large amounts of memory. In search of computational efficiency, training neural networks using half precision (float16) rather than the conventional single (float32) or double (float64) precision has gained substantial interest, given the inherent benefits of reduced computational time and memory consumed. However, we find that float16 cannot be applied to SciML methods, because of gradient divergence at the start of training, weight updates going to zero, and the inability to converge to a local minima. To overcome these limitations, we explore mixed precision, which is an approach that combines the float16 and float32 numerical formats to reduce memory usage and increase computational speed. Our experiments showcase that mixed precision training not only substantially decreases training times and memory demands but also maintains model accuracy. We also reinforce our empirical observations with a theoretical analysis. The research has broad implications for SciML in various computational applications.
\end{abstract}

\paragraph{Keywords:} scientific machine learning; partial differential equations; physics-informed neural networks; deep operator networks; mixed precision; computational efficiency
\section{Introduction} \label{sec:introduction}

In recent years, there has been an enormous increase in using deep learning and other scientific machine learning (SciML) techniques as an alternative to traditional scientific computing methods~\cite{karniadakisphysics-informed2021, brunton2023machine}. The two most common deep learning techniques are data-driven learning~\cite{lu_deeponet_2021, zhubayesian2018, Zhu2023FourierDeepONet} and physics-informed learning~\cite{pinns, zhu_physics-constrained_2019, lu2021deepxde}. Data-driven learning has proven effective in learning not only functions but also nonlinear operators of partial differential equations (PDEs)~\cite{karniadakisphysics-informed2021}. Unlike traditional deep learning methods, deep neural operators, such as deep operator networks (DeepONets)~\cite{lu_deeponet_2021, FAIR-comparison} and Fourier neural operators~\cite{li_fourier_2021}, are designed to learn operators between infinite-dimensional function spaces. Physics-informed learning, such as physics-informed neural networks (PINNs)~\cite{pinns, lu2021deepxde} and their extensions~\cite{pang_fpinns_2019, yu_gradient-enhanced_2022, wu_comprehensive_2023, Zhang2019Quantifying}, have shown promising applications in computational science and engineering~\cite{chen_physics-informed_2020, hard_constraints, daneker_systems_2023, yazdani_systems_2020}. PINNs use knowledge of the underlying PDEs, enabling them to reconstruct functions that satisfy these equations with less data. PINNs have been used to effectively solve various inverse and forward problems in ultrasound testing of materials~\cite{materials}, heat transfer problems~\cite{heat_transfer}, complex fluid modeling~\cite{fluids}, and biomedicine~\cite{biomedicine}. 

Despite the aforementioned success, training PINNs and DeepONets has many challenges. One significant challenge is that the network training usually requires long computational time and a large amount of memory~\cite{yu_gradient-enhanced_2022, FAIR-comparison, Jiang2023FourierMIONet}. For example, training these networks can take several hours or even days, depending on the complexity of the task and the available computational resources~\cite{FAIR-comparison, Jiang2023FourierMIONet}. In addition, its memory-intensive nature often requires high-performance computing facilities~\cite{yu_gradient-enhanced_2022, FAIR-comparison, Jiang2023FourierMIONet}. Possible approaches to overcome these constraints include mixed precision~\cite{micikevicius_mixed_2018}, quantization~\cite{gholami2022a}, pruning~\cite{hanlearning2015}, and distillation~\cite{hinton2015distilling}. 

For deep neural networks, network parameters are commonly processed with the IEEE standard for 32-bit floating point representation, denoted float32. However, modern graphics processing units (GPUs) possess the capability to support half-precision, which employs a 16-bit floating-point format, denoted as float16. Compared to float32, float16 offers benefits such as 50\% reduction in memory and faster arithmetic calculations, which ultimately accelerate the overall computing process~\cite{nvidia}.
Existing work applies float16 to machine learning models mostly in the domain of computer vision~\cite{yun2023defense, jia2018highly}. In particular, these studies have demonstrated substantial computational advantages when using float16 representation, as highlighted in comprehensive reports~\cite{nvidia, yun2023defense, micikevicius_mixed_2018}. 

However, directly replacing float32 with float16 in SciML could be problematic for the network accuracy. 
We show the accuracy losses of using float16 for function regression and PINNs due to difficulties in training. We then determine the causes of the failure of float16 by investigating the values, gradients, and landscapes of training loss. There is a need to develop more effective strategies that can harness the potential of float16 precision in the context of scientific machine learning. 
To address this issue, we apply mixed precision methods to SciML by combining float32 and float16 to maintain the accuracy of float32 while reducing GPU memory and training time by up to 50\%. Finally, we provide a theoretical analysis for the use of mixed precision.

The paper is organized as follows. In Sec.~\ref{sec:methodology}, we introduce two SciML methods, including PINNs and DeepONets. In Sec.~\ref{sec:float16}, we investigate the failure of SciML with float16 by testing function regression and PINNs. We introduce mixed precision methods {in Sec.~\ref{sec:mixed-precision}} and systematically test them on seven problems of PINNs and DeepONets {in Sec.~\ref{sec:mixed-precision: Results}}. In Sec.~\ref{sec:theory}, we theoritically investigate the error of mixed precision. Finally, we conclude the paper in Sec.~\ref{sec:conclusion}.
We implement the experiments using the DeepXDE library~\cite{lu2021deepxde}, and all the codes and data will be available on GitHub at~\url{https://github.com/lu-group/mixed-precision-sciml}.

\section{Methods in scientific machine learning for PDEs} \label{sec:methodology}

In this section, we briefly describe an overview of two impactful SciML methodologies: PINNs and DeepONets. These techniques stand at the forefront of this study, and we will examine them further with the mixed precision method. 

\subsection{Physics-informed neural networks}
We consider a general PDE (parameterized by $\lambda$) in a $d$-dimensional domain $\Omega \subset \mathbb{R}^d$ defined by 
\[\mathcal F[u(\mathbf{x});\lambda] = 0, \quad \mathbf{x} \in \Omega,\] and subject to the initial and boundary conditions (IC and BC)
\[\mathcal B[u(\mathbf x)] = 0, \quad x \in \partial \Omega,\]
where $u(\mathbf x)$ is the solution to the PDE. 

In a PINN, the solution $u(\mathbf x)$ is approximated by a neural network $\hat u(\x; \mathbf \theta)$ with trainable parameters $\mathbf \theta$. The loss function $\loss(\theta)$ is a sum of the PDE loss $\loss_{\text {PDE}}(\theta; \mathcal T_{\text {PDE}})$ evaluated at points $\mathcal T_{\text {PDE}}$, and initial/boundary condition loss $\loss_{\text {IC/BC}}(\theta; \mathcal T_{\text {IC/BC}})$ evaluated at initial/boundary points $\mathcal T_{\text {IC/BC}}$. In particular, the loss is of the form
\[\loss(\theta) = \loss_{\text {PDE}}(\theta; \mathcal T_{\text {PDE}}) + \loss_{\text {IC/BC}}(\theta; \mathcal T_{\text {IC/BC}}),\]
where
\begin{align*}
\loss_{\text {PDE}}(\theta; \mathcal T_\text {PDE}) &= \frac{1}{|\mathcal T_\text {PDE}|} \sum_{\x\in \mathcal T_\text {PDE}} \lVert \mathcal F[\hat u(\x;\theta); \lambda]\rVert_2^2,\\
\loss_{\text {IC/BC}}(\theta; \mathcal T_\text {IC/BC}) &= \frac{1}{|\mathcal T_\text {IC/BC}|} \sum_{\x\in \mathcal T_\text {IC/BC}} \lVert \mathcal B[\hat u(\x;\theta)]\rVert_2^2.
\end{align*}
The partial derivatives in the PDE loss $\loss_\text {PDE}$ are computed using automatic differentiation. In an inverse problem, the architecture is exactly the same, except that $\lambda$ is unknown and is inferred from some solution data by adding an additional loss function.

\subsection{Operator learning with DeepONets}
The DeepONet architecture is designed to effectively learn operators between function spaces based on the universal approximation theorem of neural networks for operators~\cite{lu_deeponet_2021}.   To learn an operator mapping from a function $v$ to another function $u$, i.e., 
$$\G: v \mapsto u,$$
the DeepONet uses two subnets: a branch network and a trunk network. The branch net takes $v$ as the input and returns $[b_1(v), b_2(v), \dots, b_p(v)]$ as output, where $p$ is the number of output neurons. For some $y$ in the domain of $u$, the trunk net takes $y$ as input and outputs $[t_1(y), t_2(y), \dots, t_p(y)]$. The combined DeepONet has output
\[\G(v)(y) = \sum_{k=1}^p b_k(v)t_k(y) + b_0 \]
for some bias $b_0 \in \mathbb R$. 

DeepONet can be trained from a dataset of many pairs of $v$ and $u$ generated by traditional numerical methods. In addition to this data-driven training, the DeepONet can also be trained using  a physics-informed framework, which uses the same loss function as used in PINNs. This adaptation is known as physics-informed DeepONet (PI-DeepONet).

\section {Failure of scientific machine learning with float16} \label{sec:float16}
Implementing low-precision data types in neural networks commonly involves using float16 for both memory storage and computations. Although this method offers the full memory and speed benefits of float16, there may be significant accuracy losses. In this section, we detail two numerical experiments that compare the performance of float16 and float32 on a function regression problem (Sec.~\ref{Function regression}) and a PINN problem for solving a one-dimensional heat equation (Sec.~\ref{heatequation}). We then analyze the accuracy losses and training difficulties encountered in these experiments. 

\subsection{Function regression}
\label{Function regression}
We test a float16 network on a regression problem using a feedforward neural network (FNN). We demonstrate the difficulty in training and the large optimization error for float16, the divergence of the float16 and float32 gradients, and the emergence of the two distinct phases of training when working with float16.

\subsubsection{Problem setup and total error of float16}
\label{prob:regression1}
We consider the function
\[f(x) = x\sin{5x}\]
on the domain $ \Omega = [-1,1]$ for the network to learn.

We trained the networks of float32 or float16 with $16$ equispaced training points (Fig.~\ref{fig:combined-figure-1}) and assessed its performance using $100$ random sampled test data points from $\Omega$. The network is trained for $10000$ iterations, and we recorded the $L^2$ relative error after each training run. To account for individual runs being stochastic, we repeated this training procedure 10 times. Subsequently, we calculated the mean and standard deviation of the $L^2$ relative error across all runs. Following this evaluation, we analyzed the reasons behind the superior performance of the float32 network compared to the float16 models.

The float32 network outperforms the float16 network, and the difference in error between the two network predictions is substantially large (Table~\ref{tab:regressionress}). This discrepancy can be broken down into two distinct error sources: the approximation error (Sec.~\ref{low approximation}) and the optimization error (Sec.~\ref{high optimization}). The approximation error is derived from the limitations in representing values with lower precision, which impacts the network's ability to accurately capture all functions. Optimization error, on the other hand, arises from network training due to reduced numerical precision. In the next subsection, we aim to understand which of the errors contributes mainly to the low accuracy for float16.
\begin{table}[htbp]
    \centering
    \caption{\textbf{$L^2$ relative errors for the two examples in Sec.~\ref{sec:float16}.}}
    \begin{tabular}{c|c c}
         \toprule
          & Regression problem (Sec.~\ref{Function regression}) & Heat equation (Sec.~\ref{heatequation}) \\ 
         \midrule
         Float32 & $\numpct{0.0133}\pm0.3\%$ & $\numpct{0.0035}\pm\numpct{0.0044}\%$ \\
         Float16 & $\numpct{0.0916}\pm\numpct{.0477}\%$ & $\numpct{.017}\pm0.6\%$\\ 
         \bottomrule
    \end{tabular}
    \label{tab:regressionress}
\end{table}

\subsubsection{Low approximation error for float16}
\label{low approximation}
To understand whether a float16 network has the approximability for our target function, we first trained a float32 network, and then we casted the network's weights from float32 to float16.
We reveal that the predictions between the float16 and float32 networks overlap, and the difference in accuracy is negligible (Fig.~\ref{fig:combined-figure-1}). Thus, the large error of float16 in Table~\ref{tab:regressionress} is not due to approximation error, and the source of the discrepancy between the float16 and float32 networks must arise from optimization error during training.

\begin{figure}[htbp]
    \centering
    \includegraphics[width=.5\textwidth]{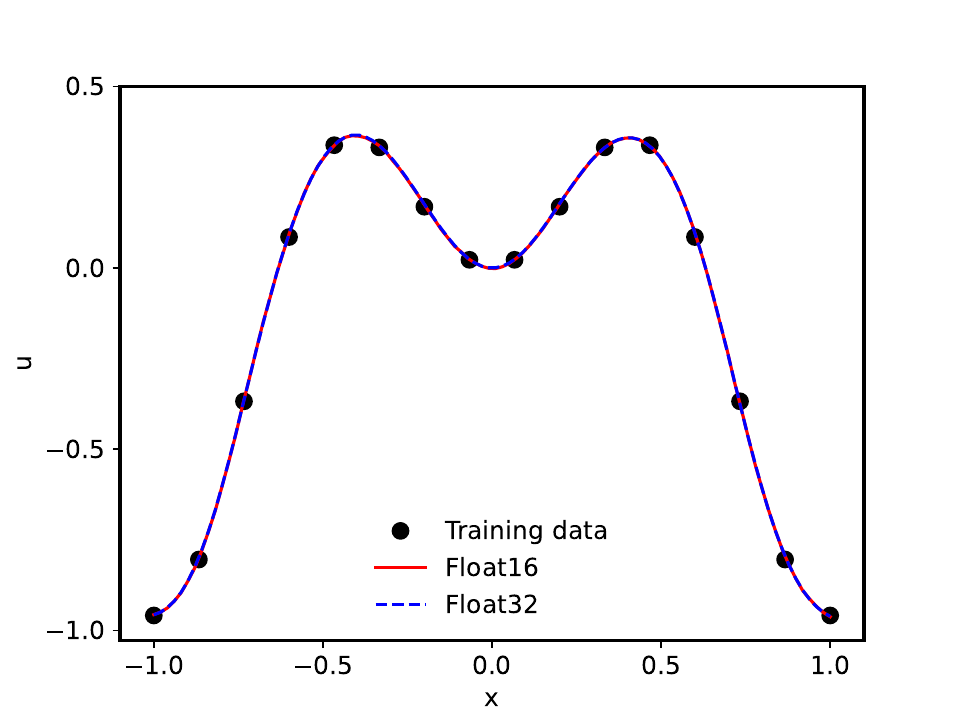}
    \caption{\textbf{Comparison of float32 and float16 for the function regression in Sec.~\ref{Function regression}.} Train a neural network using float32, and then cast the network weights and biases from float32 to float16 after training.}
    \label{fig:combined-figure-1}
\end{figure}

\subsubsection{Training difficulty and large optimization error for float16}
\label{high optimization}
In this section, we look at the reasons behind the high optimization error and how it causes the low accuracy of float16. First, we look at the network initialization step and then analyze the gradients of the networks. We further investigate the loss trajectories of the networks during training from the identical initialization of weights and biases.

\paragraph{Loss gradients at network initialization step.} Although the outputs of the networks are the same at the start of training when they are initialized with the same random seed, their gradients are not. We used the cosine similarity metric to compute the similarity of their gradients at network initialization. The cosine similarity is very small ($0.16 \pm 0.2$). Also, the $L^2$ distance and $L^2$ relative distances are large (Table~\ref{tab:differences-in-gradients}). These three metrics all suggest that the loss gradients of the float16 and float32 networks are very different, even though they have almost identical weights. Thus, the models diverge because the gradients are not pointing in the same direction at the beginning of training.

\begin{table}[htbp]
    \centering
    \caption{\textbf{Metrics to evaluate the differences in loss gradients between float16 and float32 at the first training iteration for the regression problem (Sec.~\ref{Function regression}).} The two networks of float16 and float32 have the same values at initialization. We used 10 random trials to calculate the mean and standard deviation.}
    \begin{tabular}{c c}
         \toprule
         
         Metric type & Value \\ 
         \midrule
         Cosine similarity & $0.16 \pm 0.20$\\
         $L^2$ distance & $\num{2.0440129935741425} \pm 1.0$\\
         $L^2$ relative distance & $\numpct{1.6698057115077973} \pm \numpct{1.1434329824813507}\%$\\
         \bottomrule
    \end{tabular}
    \label{tab:differences-in-gradients}
\end{table}

\begin{figure}[htbp]
    \centering
    \includegraphics[width=\textwidth]{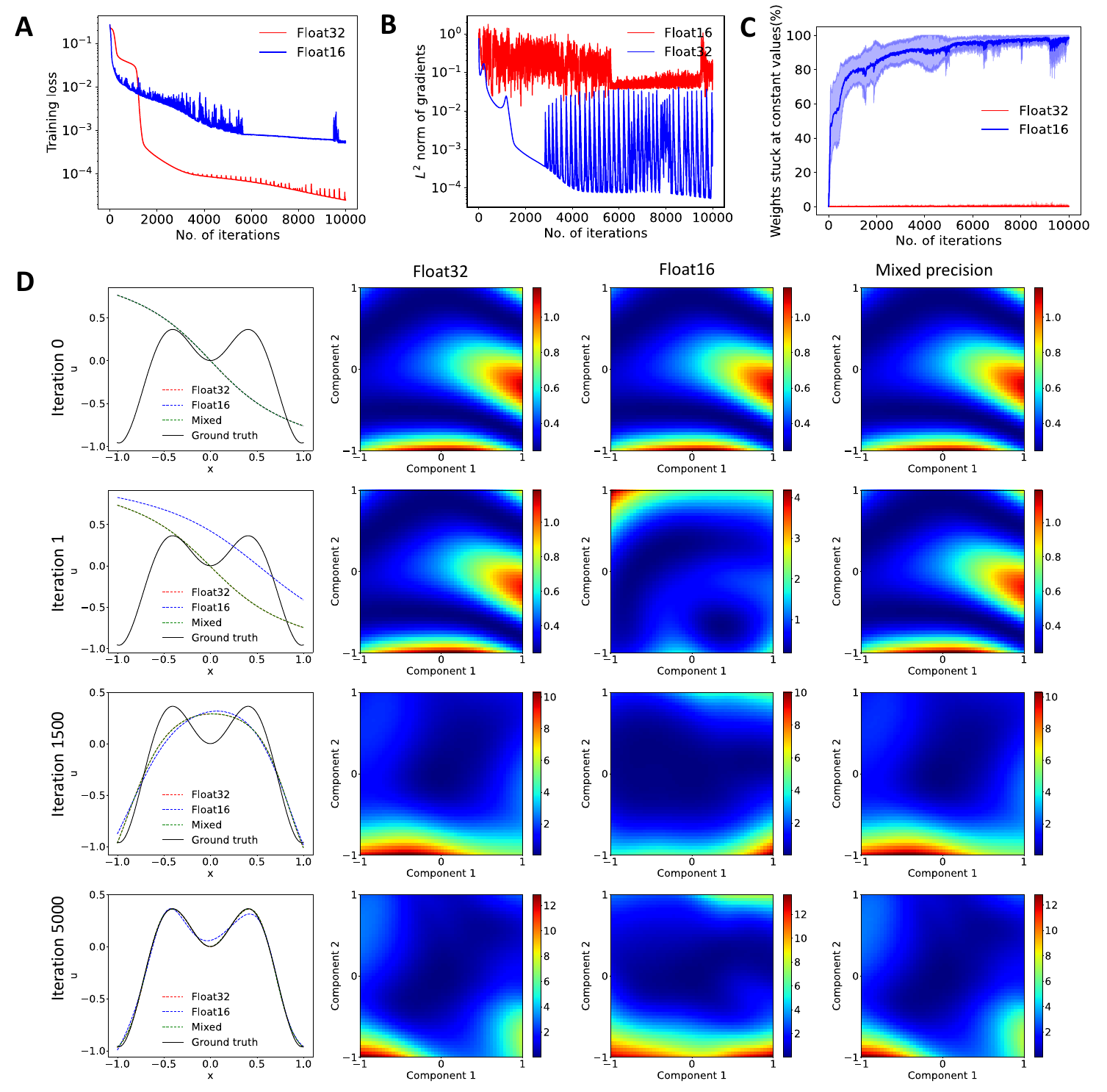}
    \caption{\textbf{Comparison of training the networks of float16, float32, and mixed precision.} Both networks are initialized with the same weights and biases.  (\textbf{A}) The training losses for float16 and float32 networks. (\textbf{B}) $L^2$ norm of the gradients of the training loss with respect to the network’s parameters. The loss gradients achieve much smaller values for float32 than float16. (\textbf{C}) The percentage of network weights that remain constant during training. The curves and shaded regions represent the mean and one standard deviation of $10$ runs. Almost all weights become constant for the float16 network. (\textbf{D}) Loss landscapes at different iterations between float32, float16, and mixed precision networks.}
    \label{fig:training-phases}
\end{figure}

\paragraph{Training loss, loss gradients, and network weights during network training.} We trained the float16 and float32 networks from the same network initialization. The float16 network underwent an initial period (the first $\sim$2000 iterations) in which it outperformed the float32 network in training loss (Fig.~\ref{fig:training-phases}A). However, the float16 network begins to learn very slowly after around 2000 iterations, resulting in a significantly higher training loss (Fig.~\ref{fig:training-phases}A). Comparing the loss gradients during network training provides more information on the different behaviors. In particular, the float16 network exhibits significantly larger gradients compared to float32 (Fig.~\ref{fig:training-phases}B). The higher initial gradient value of the float16 network leads to quick loss updates during the first $\sim$2000 iterations.
Furthermore, we found that a large portion of weights for the float16 network stays constant after some number of iterations (Fig.~\ref{fig:training-phases}C). Although the gradients are not zero (Fig.~\ref{fig:training-phases}B), the optimizer multiplies the gradients by the learning rate, and the final weight update underflows to zero  due to the limited precision of float16 and the rounding error~\cite{micikevicius_mixed_2018}. This disparity in the magnitudes of the gradients and stagnant weight updates contribute to the observed differences in training losses in Fig.~\ref{fig:training-phases}A.

\paragraph{Loss landscape during network training.} To further understand the source of the high optimization error during float16 network training, we analyzed the loss landscape of the float16 and float32 networks. The loss landscape is the representation of the loss function as a function of the network parameters. Analyzing the loss landscape for neural networks is important to understand how numerical precision affects the optimization process and the quality of the trained network. Using the visualization method in Ref.~\cite{visualloss}, we show high-dimensional loss landscapes in two dimensions at iterations 0, 1, 1500, and 5000 (Fig.~\ref{fig:training-phases}D). Specifically, we plot a 2D function $f(x,y) = \loss(\theta + x \delta + y \eta)$, where $\delta$ and $\eta$ are random vectors in the network weight space~\cite{visualloss}. 

We observe that for the float16 network, the loss landscape changes significantly from the 0th iteration to the 1st iteration compared to the float32 network, which barely changes. As network training progresses, the float16 loss landscape completely diverges compared to the float32 network. This is further supported by the magnitude of the weight gradients (Fig.~\ref{fig:training-phases}B).

\paragraph{Two training phases for understanding the failure of float16.} The analysis above in this section emphasizes a two-phase training process for the float16 network. 
\begin{itemize}
\item In the first phase, the loss gradients are large (Fig.~\ref{fig:training-phases}B), and therefore the float16 network learns faster and the training loss is below the float32 network (Fig.~\ref{fig:training-phases}A). 
\item In the second phase, the float16 network weights barely updates (Fig.~\ref{fig:training-phases}C) and the training loss remains relatively constant.
\end {itemize}

\subsection{PINNs} 
\label{heatequation}
In addition to a function regression problem, we also tested a PINN using float16. We demonstrate the low accuracy observed and an analysis of the loss landscape. 
\subsubsection{Heat equation} \label{prob:heatequation-setup}
We solve the heat equation 
\begin{equation*}
\frac{\partial u}{\partial t} = \alpha \frac{\partial^2 u}{\partial x^2}, \qquad x \in [0, 1], t \in [0, 1] 
\end{equation*} 
with the boundary condition
$ u(0,t) = u(1,t) = 0 $
and initial condition
$ u(x,0) = \sin\left (\frac{\pi x}{L} \right).$
Here, $\alpha = 0.4$ represents the thermal diffusivity constant.
Similar to the function regression problem, the float32 network outperforms the float16 network (Table~\ref{tab:regressionress}).

\subsubsection{Training difficulty and large optimization error for float16 with PINNs}
In this section, we provide similar analyses of the two networks just as
we did in function regression (Sec.~\ref{Function regression}). The PINN example follows the same trend as the function regression problem; however, the loss landscape of the PINN model further emphasizes the struggles of float16 during training.

We first confirm that both the networks of float32 and float16 can approximate the target solution well by casting the float32 network to the float16 network (Fig.~\ref{fig:combined_loss_landscapes}A). 
To visualize how training evolves, we compared the loss landscapes of the first two iterations between float16 and float32.  
A smooth loss landscape is desirable because it indicates that the optimization process is stable and can converge to a good solution. We illustrate that the loss landscape of float32 remains smooth at iteration 1, with a small change between iteration 0 and iteration 1 (Fig.~\ref{fig:combined_loss_landscapes}B). The loss landscape for float16 at iteration 0, on the other hand, exhibits roughness and diverges from the ideal landscape (Fig.~\ref{fig:combined_loss_landscapes}B). The empty white space represents numerical values that exploded to NaN due to numerical instability. By iteration 1, the instability intensifies, leading to more NaN values, and eventually does not converge to a good solution.

\begin{figure}[htbp]
    \centering
    \includegraphics[scale=.7 ]
    {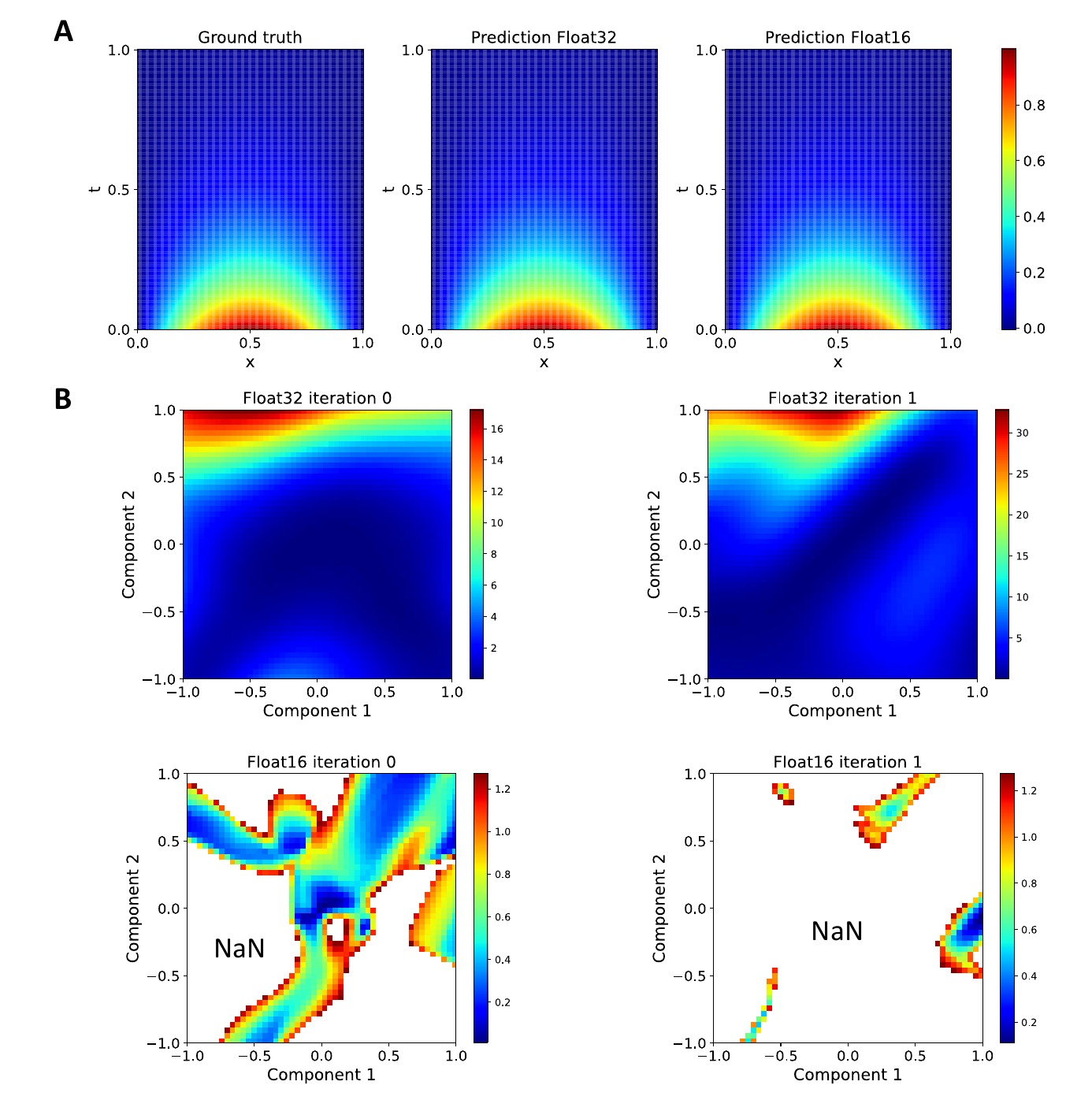}
    \caption{\textbf{Comparison of PINN predictions and loss landscapes at different iterations for the heat equation (Sec.~\ref{heatequation}).} (\textbf{A}) Train a PINN using float32, and then cast the network weights and biases from float32 to float16 after training. (\textbf{B}) The local loss landscapes of two networks at different iterations. The change in loss landscape for float32 network from iteration 0 to iteration 1 is smooth, while the change in loss landscape for float16 network from iteration 0 to iteration 1 is not smooth and the loss landscape has regions of NaNs.}
    \label{fig:combined_loss_landscapes}
\end{figure}

\section{Scientific machine learning with mixed precision: Methods} \label{sec:mixed-precision}

In this section, we explain the mixed precision method and how we applied it to SciML. 

\subsection{Mixed precision methods}

To enhance the efficiency and effectiveness of training deep neural networks, the utilization of mixed precision techniques has emerged. Mixed precision is an approach that combines the half precision (float16) presented in Sec.~\ref{sec:float16} and single precision (float32) numerical format to improve memory efficiency and computational speed.

\subsubsection{Generic mixed precision strategies}

There are two main strategies to deal with the inaccuracy of low-precision representations: mixed precision and loss scaling~\cite{nvidia, micikevicius_mixed_2018}. During training, mixed precision approaches the performance of the float32 network significantly better than that of the float16 network (Fig.~\ref{fig:training-phases}).

With a mixed precision network, a copy of float32 weights is maintained, but calculations, including forward and backward passes, are performed with float16 (Fig.~\ref{fig:flowchart}). The network computes the loss and gradients with float16 format, during which the bulk of the memory consumption and the calculations arise. The gradients are then converted back to the more precise float32 copy.

The second strategy, known as loss scaling, serves to maximize the utilization of the float16 data range when computing gradients (Fig.~\ref{fig:flowchart}). In the float16 representation, values can be accurately expressed within the range of $[6.10\cdot 10^{-5}, 6.55\cdot 10^{4}]$, but the actual values activated during training are usually less than one~\cite{nvidia}. When the gradients are scaled to larger values by multiplying with a scale factor, the range of activated values effectively shifts upward, reducing the risk of underflow in the backward pass. Note that the gradients are unscaled prior to their use in the stochastic gradient descent (SGD) algorithm. 

\begin{figure}[htbp]
\centering
\includegraphics[]{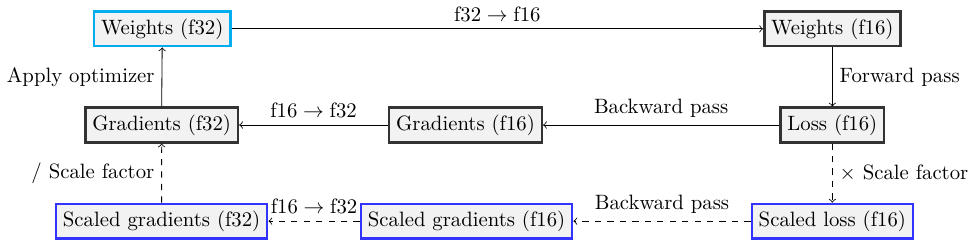}
\caption{\textbf{Flowchart of training networks with mixed precision.} The dashed path represents the optional loss scaling technique with a scale factor. Float32 and float16 are abbreviated as f32 and f16, respectively.}
\label{fig:flowchart}
\end{figure}

To implement mixed precision, for PyTorch, we used the PyTorch Automatic Mixed Precision (AMP). The core characteristic in the PyTorch AMP is the autocast feature, which intelligently manages the precision of various operations during network training. Under autocast, most calculations were performed in float16 to optimize the speed and memory efficiency. However, to maintain stability and accuracy, certain operations deemed prone to instability are performed in float32~\cite{amp}. If PyTorch AMP is used, any network evaluations done, e.g., mid-training testing, must be wrapped with autocast to avoid excess memory usage. For TensorFlow, we used the built-in support for mixed precision layer policies~\cite{tf-mixed-precision} by setting the global policy and the policy of a layer to \texttt{mixed\_float16}, which will cause the layer to store values in float32 and perform computations in float16.

\subsubsection{Additional techniques for SciML}

To enable mixed precision to better work with SciML, we introduce a number of additional techniques to mixed precision. We list them below.

\begin{enumerate}
    \item During the network training, we increased the value of the $\epsilon$ parameter in the Adam optimizer~\cite{kingma2017adam} from the default value of $10^{-7}$ to a larger value around $10^{-5}$. This adjustment was necessary to maintain numerical stability during the training process. 
    \item  For most DeepONet problems, we changed the loss function from the mean squared error to the mean $L^2$ relative error. We made this change because the mean squared error was overflowing when using mixed precision.
    \item In some cases, when the loss function was of the form $ A^2 / B $, i.e., the squared error $A^2$ normalized by a factor $B$, it overflowed. We fixed this by transforming it into the mathematically equivalent but less likely to overflow $ (A/\sqrt{B})^2$.
    \item In PINNs, when determining if a point is on a boundary/initial condition, we increased the absolute tolerance for the closeness function (\texttt{np.isclose}) from $10^{-6}$ to $10^{-4}$ due to the reduced precision of float16. The \texttt{np.isclose} function measures if two floating-point numbers are functionally the same (close), since floating point numbers are usually not exactly equal due to their representation in memory.
\end{enumerate}

Additionally, although others have found that loss scaling is useful~\cite{micikevicius_mixed_2018}, we did not find it useful for our work. Instead, we found that the only time an overflow occurred was in the loss function, and that the way to fix it was simply to change the loss function using the aforementioned techniques. In Sec.~\ref{sec:mixed-precision: Results}, all values of training time and memory usage are based on a TensorFlow implementation of mixed precision. During the preparation of this paper, a new paper~\cite{white2023speeding} appeared, which proposed to apply mixed precision to Fourier neural operators.

\section{Scientific machine learning with mixed precision: Results}  \label{sec:mixed-precision: Results}
We apply mixed precision techniques to PINNs, DeepONets, and physics-informed DeepONets.
To evaluate the performance of the networks, we compute the $L^2 $ relative error, and for each case, we record the training time and peak memory consumed and compare them. The hyperparameters of all the problems solved are in Appendix~\ref{sec:hyperparameters}.
\subsection{Mixed precision in PINNs}
In this section, we demonstrate the effectiveness of mixed precision in forward and inverse PINN problems. 
\subsubsection{Burgers' equation}\label{prob:burgers}

We solved the Burgers' equation below:  
$$
\frac{\partial u}{\partial t} + u\frac{\partial u}{\partial x} = \nu\frac{\partial^2u}{\partial x^2}, \qquad x \in [-1, 1], \quad t \in [0, 1],
$$
with the following boundary conditions,
$$
u(-1,t)=u(1,t)=0, \quad u(x,0) = - \sin(\pi x).
$$
The float16 network has a large $L^2$ relative error, and there is no major discrepancy in $L^2$ relative error between the float32 baseline network and the mixed precision network (Table ~\ref{tab:burgers-mixed}). The memory required for float16 and mixed precision networks were approximately half of the memory required for the float32 network. However, because we used a small network with a small number of training points for this problem (Table~\ref{tab:hyper-fnn}), we did not discover any time benefits.
\begin{table}[htbp]
\centering
\caption{\textbf{Comparison of the  $L^2$ relative error, time, and memory among float32, float16, and mixed precision networks for the Burgers' equation (Sec.~\ref{prob:burgers}). } }
\begin{tabular}{c|ccc}
\toprule
 & $L^2$ relative error & Time (s) & Memory (MB) \\
\midrule
Float32 & $ 3.21\pm 0.51 \%$ & $ 14.21\pm 0.20 $ &  $  8.63 \pm 0.07 $  \\
Float16 & $ 19.8\pm 1.5 \%$ & $ 15.49\pm 0.07 $ &  $  4.42\pm 0.02 $  \\
Mixed precision& $ 3.04\pm 0.22 \%$ & $ 16.09\pm 0.78 $  &  $  4.45\pm 0.03 $  \\

\bottomrule\end{tabular}
\label{tab:burgers-mixed}
\end{table}

\subsubsection{Kovasznay flow}\label{prob:kovasznay}
We solved the 2-dimensional Kovasznay flow on $\Omega = [0, 1]^2$,
\begin{align*}
\quad (\mathbf{u} \cdot \nabla) \mathbf{u} + \nabla p &= \frac{1}{\text{Re}} \nabla^2 \mathbf{u},
\end{align*}
with the following boundary conditions on $\partial \Omega$:
\[ u(x, y) = 1 - \exp(lx) \cos(2\pi y), \] 
\[ v(x, y) = \frac{l}{2\pi} \exp(lx) \sin(2\pi y). \]
For the pressure $p(x, y)$, a Dirichlet boundary condition is imposed on the right boundary with an outflow condition:
\[ p(x, y) = \frac{1}{2} \left(1 - \exp(2lx)\right) \quad \text{for} \quad x = 1, \] where $l$ is defined as $l = \frac{1}{2\nu} - \sqrt{\frac{1}{4\nu^2} + 4\pi^2}$ and Re = 20 is the Reynold's number used for this problem. 

The $L^2$ relative error for float16 is large for all variables, and the $L^2$ relative errors for mixed precision and float32 are lower and very similar (Table~\ref{tab:kovasznay} and Fig.~\ref{fig:kovasznay}). The mixed precision model uses around half as much memory as float32, and there is a $1.13\text{x}$ ($\approx245.70/216.74$) speedup when using mixed precision over float32. This example further validates our claim that mixed precision is a viable alternative to float32.

\begin{table}[h]
    \centering
    \caption{\textbf{Comparison of the $L^2$ relative error, time, and memory among float32, float16, and mixed precision networks for the Kovasznay flow problem (Sec.~\ref{prob:kovasznay}).}}
    \resizebox{\textwidth}{!}{
    \begin{tabular}{c|ccccc}
        \toprule
        \multirow{2}{*}{} & \multicolumn{3}{c}{$L^2$ relative error} & Time (s) & Memory (GB) \\
        \cmidrule{2-4}
         & $u$ & $v$ & $p$ & & \\
        \midrule
        Float32 & $0.40\pm0.15\%$ & $1.20\pm0.78\%$ & $0.50\pm0.13\%$ & $245.70 \pm 3.44$ & $1.51 \pm 0.02$\\
        Float16 & $7.50\pm0.41\%$ & $34.70\pm24.37\%$ & $23.10\pm1.24\%$ & $213.71 \pm 1.79$ & $0.76 \pm 0.01$\\
        Mixed precision & $0.60\pm0.04\%$ & $1.20\pm0.24\%$ & $0.60\pm0.09\%$ & $216.74 \pm 4.19$ & $0.76 \pm 0.01$\\
        \bottomrule
    \end{tabular}}
    \label{tab:kovasznay}
\end{table}

\begin{figure}[htbp]
    \centering
    \includegraphics[width=0.99\textwidth]{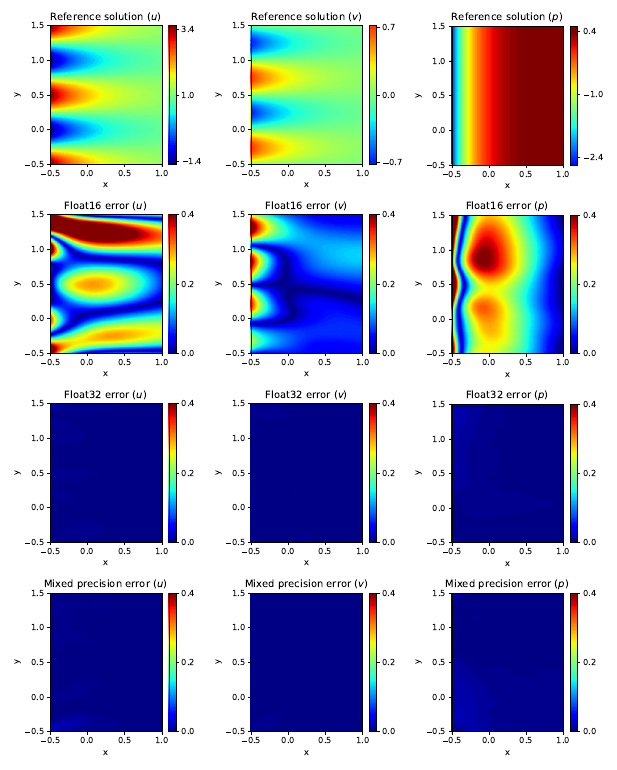}
    \caption{\textbf{Reference solutions and point-wise absolute error of velocity for the Kovasznay flow problem.} (First row) Reference solution. (Second row) The network prediction of float16 has large error. (Third and fourth rows) The network predictions of float32 and mixed precision have low error.}
    \label{fig:kovasznay}
\end{figure}

\subsubsection{Inverse problem of hemodynamics}
\label{prob:hemodynamics}
To demonstrate the potential of mixed precision training to significantly reduce training time, we solved a more computationally intensive problem. We simulate blood flow in the blood vessel with PINN, particularly focusing on platelet dynamics and thrombus formation~\cite{yazdani2018data}. The flow is constrained to satisfy the Navier-Stokes equations on the domain shown in Fig.~\ref{fig:hemodyanamics domain}A, 
\begin{align*}
\text{Momentum equation:} \quad (\mathbf{u} \cdot \nabla) \mathbf{u} + \nabla p &= \frac{1}{\text{Re}} \nabla^2 \mathbf{u},\\
\text{Continuity equation:} \quad \nabla \cdot \mathbf{u} &= 0,
\end{align*}
with no-slip boundary conditions applied on all solid faces.
This problem is an inverse problem, and we assume that the values for the inlet flow and the outlet pressure are unknown.

In terms of accuracy, the mixed precision network performed comparably to the standard float32 network (Table~\ref{tab:combined-hemo} and Fig.~\ref{fig:hemodyanamics domain}B). A significant reduction of 53\% in training time and 50\% in memory was observed, suggesting that, for computationally intensive problems, training with mixed precision allows for improved performance without sacrificing accuracy (Table~\ref{tab:combined-hemo}).
\begin{figure}[htbp]
    \centering
    \includegraphics[width=\textwidth]{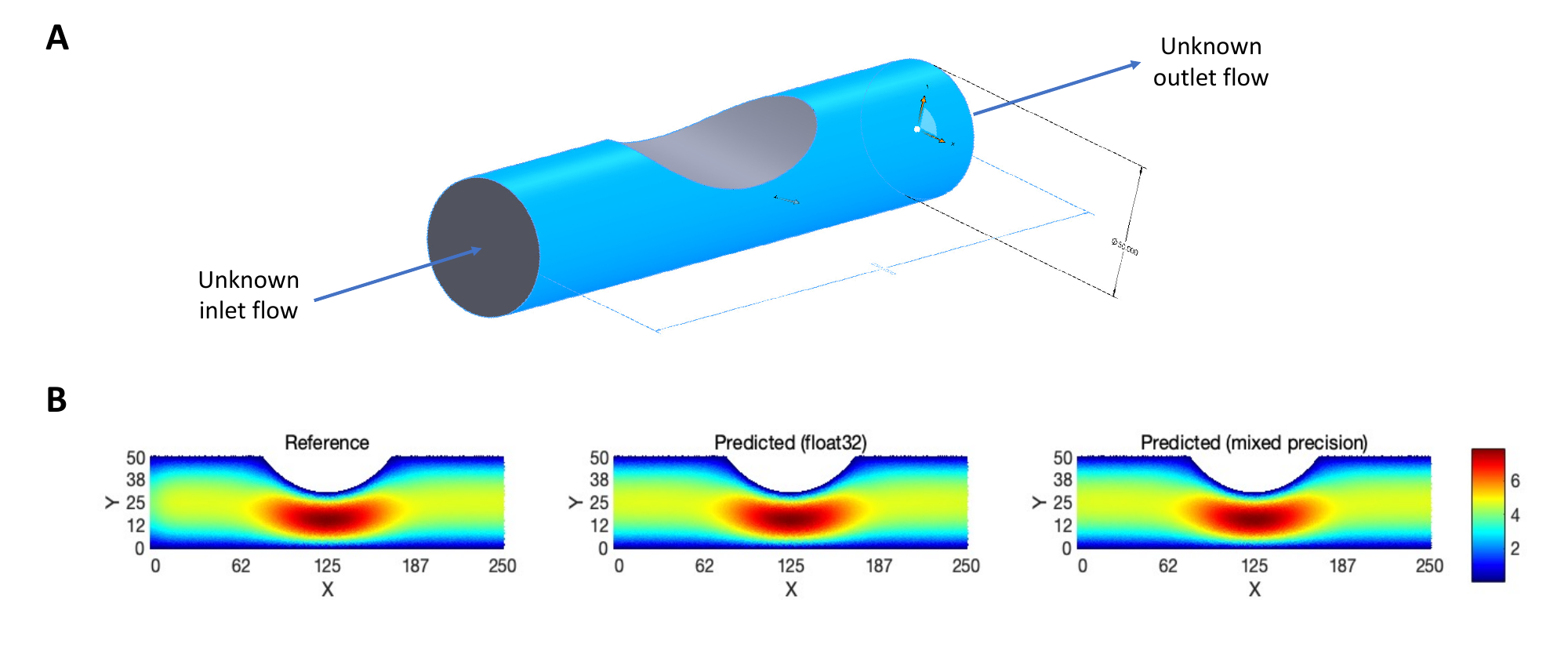}
    \caption{\textbf{Setup and PINN results for the hemodynamic problem (Sec.~\ref{prob:hemodynamics}).} (\textbf{A}) Computational domain. The inlet and outlet flow conditions are unknown. (\textbf{B}) The comparison of the magnitude of velocity for float32 and mixed precision network predictions at the center of the domain. The predicted solutions for float32 and mixed precision networks closely match the reference solution.}
    \label{fig:hemodyanamics domain}
\end{figure}
\begin{table}[htbp]
    \centering
    \caption{\textbf{Comparison of the $L^2$ relative error, time, and memory among float32, float16, and mixed precision networks for the hemodynamics problem (Sec.~\ref{prob:hemodynamics}).} }
    \begin{tabular}{c|ccc}
         \toprule
         & $L^2$ relative error of $||\textbf{u}||$ & Time (hours) & Memory (GB) \\
         \midrule
         Float32 & $2.23\pm0.02\%$ & $10.92 \pm 0.01$ & $11.25 \pm 0.01$\\
         Mixed precision & $2.30\pm0.01\%$ & $5.76 \pm 0.01$ & $5.61 \pm 0.01$ \\
         \midrule
         Ratio & 1.03 & 0.53 & 0.50 \\
         \bottomrule
    \end{tabular}
    \label{tab:combined-hemo}
\end{table}

\subsubsection{Inverse problem of flow in a rectangular domain}\label{prob:inverse}
To demonstrate the efficacy of mixed precision techniques in inverse problems, we tackled another inverse Navier-Stokes problem by focusing on incompressible flow in a rectangular domain $\Omega = [1, 8] \times [-2, 2]$ for a time period $[0, 7]$. The equation below describes the problem: 
\begin{align*}
\text{Momentum equation:} \quad \frac{\partial \mathbf{u}}{\partial t} + \lambda_1(\mathbf{u} \cdot \nabla) \mathbf{u} &= -\nabla p + \lambda_2 \nabla^2 \mathbf{u},\\
\text{Continuity equation:} \quad \nabla \cdot \mathbf{u} &= 0,
\end{align*}
where $\mathbf{u}$ is the velocity field and $p$ is the pressure field.
The true values of $\lambda_1$ and $\lambda_2$ are $1$ and $0.01$, respectively, and they denote the density and viscosity in the Navier-Stokes equation. Here, we assume that $\lambda_1$ and $\lambda_2$ are unknown, and the goal is to infer the values of $\lambda_1$ and $\lambda_2$ from the PDE solution data, i.e., the pressure field, the horizontal velocity, and the vertical velocity.  

The $L^2$ relative errors for velocity components $u$ and $v$ and the pressure field $p$ are lowest for the float32 network, indicating the highest accuracy (Table~\ref{tab:Inverse Navier Stokes}). The mixed precision network shows slightly higher but comparable $L^2$ relative errors, while the float16 network has significantly higher $L^2$ relative errors. The float32 network shows the lowest error when predicting $\lambda_1$.  Interestingly, the mixed precision network performs slightly better than float32 when predicting $\lambda_2$, although the error margin is quite large. This trend suggests that the mixed precision network can maintain a high level of accuracy, which is on par with the float32 network. The float16 network has the highest error, which is consistent with the trends observed in other variables (Table~\ref{tab:Inverse Navier Stokes}). There is a $1.75\text{x}$ ($\approx616.03/351.57$) speedup when using mixed precision over float32 while maintaining a 50\% ($\approx(1028.12-513.49)/1028.12$) memory reduction consistent with the float16 network.

\begin{table}[htbp]
\centering
\caption{\textbf{Comparison among float32, mixed, and float16 precision networks for the Navier-Stokes inverse problem  (Sec~\ref{prob:inverse}).}}
\begin{tabular}{cc|ccc}
\toprule
 & & Float32 & Mixed precision &  Float16 \\
\midrule
\multirow{3}{*}{$L^2$ relative error} & $u$ & $0.79\pm0.04\%$ &  $0.90\pm0.05\%$ & $3.39\pm0.25\%$\\
& $v$ & $2.57\pm0.05\%$ &  $3.33\pm0.14\%$ & $14.80\pm1.42\%$\\
& $p$ & $2.00\pm0.07\%$ &  $3.30\pm0.34\%$ & $12.70\pm0.62\%$\\
\midrule
\multirow{2}{*}{Relative error}& $\lambda_1$ & $0.03\pm0.06\%$ & $1.10\pm0.10\%$ & $3.70\pm0.46\%$ \\
& $\lambda_2$ & $7.00\pm1.73\%$ & $5.00\pm3.61\%$ & $15.70\pm7.02\%$ \\
\midrule
Training time (s) & & \makecell[c]{$616.03\pm 0.15$ } & \makecell[c]{$351.57\pm 0.70$} & \makecell[c]{$349.54\pm 0.31$} \\
Memory (MB) & & \makecell[c]{$1028.12 \pm 1.03$} & \makecell[c]{$513.49\pm 0.03$} & \makecell[c]{$513.68 \pm 0.33$} \\
\bottomrule
\end{tabular}
\label{tab:Inverse Navier Stokes}

\end{table}

\subsection{Mixed precision in DeepONet}
We apply mixed precision to DeepONet problems.

\subsubsection{Advection equation}\label{prob:advection}

We consider the advection equation with a periodic boundary condition~\cite{FAIR-comparison}:
$$
    \pder{u}{t} + \pder{u}{x} = 0, \quad \Omega = [0, 1]^2.
$$
We choose the initial condition as a square wave centered at $x$ of width $w$ and height $h$:
\[
u_0(x) = h \cdot \mathbf{1}_{[c - w/2, c + w/2]},
\]
where $(c, w, h)$ are randomly chosen from $[0.3, 0.7] \times [0.3, 0.6] \times [1, 2]$. We learn the mapping from the initial condition $u_0(x)$ to the solution at $t = 0.5$.

We observe notable findings in the relative $L^2$ error, the use of memory and the computation time (Table~\ref{tab:advection}). The prediction of mixed precision network demonstrates a comparable relative $L^2$ error of approximately 0.33\%, with a slightly wider range of variation (±0.04\%) compared to float32 network. The mixed precision network shows a noticeable reduction in memory usage, averaging approximately 34.63 MB (±0.0243 MB). Although it does not achieve an exact 50\% reduction in memory compared to float32, this reduction is still substantial. There is also a $1.11\text{x}$ ($\approx635.08/567.97$) speed-up when using mixed precision over float32. This result shows how well the mixed precision framework works with the DeepONet architecture.
\begin{table}[!ht]
\centering
\caption{\textbf{Comparison of the $L^2$ relative error, time, and memory among float32, float16, and mixed precision networks  for the advection problem (Sec.~\ref{prob:advection}).}}
\begin{tabular}{c|ccc}
\toprule
& $L^2$ relative error & Training time (s)  & Memory (MB)\\
\midrule
Float32 & $0.33\pm0.01\%$ & $635.08 \pm 3.85$ & $53.47 \pm 0.87$\\
Float16 & $0.51\pm0.02\%$ & $511.33 \pm 31.75$ & $31.46 \pm 0.01$\\
Mixed precision & $0.33\pm0.04\%$ & $567.97 \pm 3.68$ & $34.63 \pm 0.02$\\
\bottomrule
\end{tabular}
\label{tab:advection}
\end{table}

\subsubsection{Linear instability waves in high-speed boundary layers}\label{prob:pod-deeponet}

We applied the mixed precision method to the linear instability wave problem~\cite{di2021deeponet, FAIR-comparison}. The aim is to predict the evolution of linear instability waves in a compressible boundary layer. This involves studying how the initial upstream instability wave behaves as it travels downstream within a specific region. The focus is on instability waves with small amplitudes, which can be effectively characterized using linear parabolized stability equations. These equations are derived from the Navier–Stokes equations by dividing the flow into a base flow and a perturbation.
Here we consider air with Prandtl number = 0.72 and ratio of specific heats $\gamma = 1.4$. The free-stream Mach number is $\text{Ma} = 4.5$ and the free-stream temperature is $T_0 =  65.15\text{K}$. We set the inflow location of our configuration slightly upstream at $\sqrt{\text{Re}_{x0}} = 1800$. When the perturbation frequency is $\omega$, the instability frequency is $\omega10^6/\sqrt{\text{Re}_{x0}}$. More details about this problem and the dataset can be found in Ref.~\cite{di2021deeponet}. The output functions in the dataset differ by more than two orders of magnitude, so we first normalized all functions so that the maximum value of each function is 1. We then computed the POD modes and use this POD-DeepONet to solve this problem.

The float16 network fails in solving this problem, achieving an $L^2$ relative error greater than 100\%. The $L^2$ relative errors for mixed precision and float32 are lower at $22.9\%$ and $23.5\%$, respectively (Table~\ref{tab:liw-pod-deeponet}). We get a significant speedup of $1.48\text{x}$ ($\approx13.49/9.10$)  and a 50\% ($\approx(941.54-471.81)/941.54$) reduction in memory when training with mixed precision rather than float32, while still maintaining comparable accuracy (Table~\ref{tab:liw-pod-deeponet}). 
\begin{table}[htbp]
\centering
\caption{\textbf{Comparison of the $L^2$ relative error, time, and memory among float32, float16, and mixed precision networks for the linear instability wave (Sec.~\ref{prob:pod-deeponet}).}}
\begin{tabular}{c|ccc}
\toprule
& $L^2$ relative error & Training time (hours)  & Memory (MB)\\
 \midrule
Float32 & $22.9\pm1.30\%$ & $13.49 \pm 0.44$ & $941.54 \pm 0.02$\\
Float16 & $102.1\pm0.20\%$ & $8.43 \pm 0.23$ &  $471.81 \pm 0.01$\\
Mixed precision & $23.5\pm0.70\%$ & $9.10 \pm 0.05$ & $471.71 \pm 0.01$\\
\bottomrule
\end{tabular}
\label{tab:liw-pod-deeponet}
\end{table}

\subsubsection{Physics-informed DeepONet for solving the diffusion reaction equation}\label{sec:pi-deeponet}\label{prob:pi-deeponet}

In addition to data-driven DeepONets, we also tested a physics-informed DeepONet. We learn an operator for a diffusion reaction system given an arbitrary source term $v(x)$, satisfying 
\[
\pder{u}{t} = D \pdd{u}{x} - ku^2 + v(x), \quad \Omega = [0,1]^2,
\]
with $D = 0.01$ and $k = 0.01$. The boundary conditions are:
\[
u(x,0) = 0, \quad u(0,t) = u(1,t) = 0.
\]
A DeepONet was trained with a physics-informed loss function.

The mixed precision network achieves a $L^2$ relative error ($2.2\pm1.1\%$) comparable to the float32 network, indicating that it maintains high accuracy while benefiting from reduced precision in computations.
The float16 network exhibits a significantly higher $L^2$ relative error ($6.3\pm3.1\%$), which is almost three times higher than the float32 network. The mixed precision network achieved almost the same $L^2$ relative error as the float32 model, while saving $50\%$ ($\approx(269.84-135.80)/269.84$) on memory and $43\%$ ($\approx(1198-682)/1198$) on time (Table~$\ref{tab:pi-deeponet}$). This example further emphasizes the advantage that mixed precision has with physics-informed models, even when combined with a DeepONet architecture.
\begin{table}[htbp]
\centering
\caption{\textbf{Comparison of the $L^2$ relative error, time, and memory among float32, float16, and mixed precision networks  for the diffusion reaction problem (Sec.~\ref{prob:pi-deeponet}).}}
\begin{tabular}{c|ccc}
\toprule
& $L^2$ relative error & Training time (s)  & Memory (MB)\\ 
\midrule
Float32 & \(\numpct{0.022} \pm 1.0\%\) & \(1198\)  & \(269.84\)\\ 
Float16 & \(\numpct{0.063} \pm \numpct{0.031}\%\) & \( 678\)  & \(134.92\)\\ 
Mixed precision & \(\numpct{0.022} \pm \numpct{0.011}\%\) & \(682\)  & \(135.80\)\\ 
\bottomrule
\end{tabular}
\label{tab:pi-deeponet}
\end{table}

\section{Analysis of accuracy for mixed precision methods} \label{sec:theory}
In this section, we provide a theoretical basis for gradient descent with mixed precision in neural networks that require high precision such as PINNs and DeepONets. We analyze the behavior of gradient descent near a local minimum in the loss function.

\subsection{Theoretical analysis of the gradient and value of training loss}

We model the mixed precision gradient descent as follows. Suppose that $\theta$ is the vector containing the weights and biases, $\loss(\theta)$ is the training loss function, and $\eta$ is the learning rate. In a mixed precision network, the gradients are calculated with float16 and thus differ from the true gradients $\nabla \loss(\theta)$, and the weights are updated via
\[\theta \leftarrow \theta - \eta \nabla \loss(\theta + \delta \theta).\]
Here, $\delta \theta$ is the rounding error of float16 that satisfies $\lVert \delta \theta \rVert \le 2^{-11} \lVert \theta \rVert$.

In a neural network trained with forward/backward propagation in float16, inherent errors in the gradient prevent the network from reaching its minimum. In order to analyze the accuracy of the network, we consider that the loss function $\loss$ has a minimum at $\theta^*$ and assume the following:
\begin{enumerate}
    \item $\loss$ is convex.
    \item $\nabla \loss$ is Lipschitz continuous with a Lipschitz constant $L \le \frac{1}{\eta}$.
\end{enumerate}
Then we show that, by using gradient descent, the gradient of the loss function can be small enough (Theorem~\ref{thm:thm1}).
\begin{theorem} \label{thm:thm1}
The mixed precision loss function will reach some $\theta$ by gradient descent, such that 
\[\label{eq:thm1}
\lVert \nabla \loss(\theta) \rVert < \frac{2 + \sqrt{6}}{2^{11}}L \lVert \theta \rVert. \tag{6.1.1}\]
\end{theorem}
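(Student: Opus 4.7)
The plan is a proof by contradiction driven by the $L$-smoothness (descent) lemma applied to the noisy gradient update. First, I record the basic consequence of $L$-Lipschitzness of $\nabla\loss$ together with the float16 rounding bound $\|\delta\theta\|\le 2^{-11}\|\theta\|$: writing $g_k := \nabla\loss(\theta_k)$, $\tilde g_k := \nabla\loss(\theta_k + \delta\theta_k)$, and $r_k := 2^{-11}L\|\theta_k\|$, I get
\[
\|\tilde g_k - g_k\| \;\le\; L\,\|\delta\theta_k\| \;\le\; r_k.
\]

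Second, I apply the descent lemma to the update $\theta_{k+1} = \theta_k - \eta\tilde g_k$ to obtain
\[
\loss(\theta_{k+1}) \;\le\; \loss(\theta_k) - \eta\langle g_k, \tilde g_k\rangle + \tfrac{L\eta^2}{2}\|\tilde g_k\|^2.
\]
Substituting $\tilde g_k = g_k + (\tilde g_k - g_k)$, bounding the cross term by Cauchy--Schwarz and the norm term by the triangle inequality, and then using the hypothesis $L\eta \le 1$, I would derive a worst-case one-step bound of the form
\[
\loss(\theta_{k+1}) - \loss(\theta_k) \;\le\; \eta\bigl(-\tfrac{1}{2}\|g_k\|^2 + 2\|g_k\|\,r_k + r_k^2\bigr).
\]
The coefficients in this bound are exactly what produce $(2+\sqrt{6})$ as the critical threshold in step three, because $(2+\sqrt{6})$ is the positive root of $x^2 - 4x - 2 = 0$.

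Third, I argue by contradiction: suppose $\|g_k\| > (2+\sqrt{6})\,r_k$ for every $k$. Then $\|g_k\|^2 - 4\|g_k\|r_k - 2r_k^2 > 0$, so the bracket above is strictly negative and $\{\loss(\theta_k)\}$ is strictly decreasing at every step. By convexity, $\loss(\theta_k) \ge \loss(\theta^*)$, so the sequence is bounded below; combined with an infinite sequence of strict per-step decreases whose sizes do not collapse, this is a contradiction. Hence some iterate satisfies $\|g_k\| \le (2+\sqrt{6})\,r_k = \tfrac{2+\sqrt{6}}{2^{11}}L\|\theta_k\|$, which is \eqref{eq:thm1}.

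The main obstacle will be pinning down exactly the coefficients $(-\tfrac{1}{2},\,2,\,1)$ in the one-step bound so that the discriminant of the resulting quadratic yields $\sqrt{6}$ rather than, say, $\sqrt{5}$: the worst-case sign of $\langle g_k, \tilde g_k - g_k\rangle$ in the linear term is not compatible with the worst case in the expansion of $\|\tilde g_k\|^2$, so I must take the envelope over $\|\tilde g_k - g_k\|\le r_k$ consistently rather than bounding each occurrence independently and losing a factor. A secondary subtlety is that $r_k$ depends on $\|\theta_k\|$, so to prevent the per-step decrease from collapsing as $k\to\infty$ I would lower-bound $\|\theta_{k+1}\|$ via the triangle inequality and $L\eta\le 1$, which ensures $\|\theta_k\|$ stays away from zero while the contradiction hypothesis is in force.
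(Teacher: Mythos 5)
Your steps one and two mirror the paper's Appendix~B exactly: bound $\lVert\nabla\loss(\theta+\delta\theta)-\nabla\loss(\theta)\rVert$ by $L\lVert\delta\theta\rVert$, feed $\theta'=\theta-\eta\nabla\loss(\theta+\delta\theta)$ into the $L$-smooth descent lemma, and simplify using $L\eta\le 1$. (A small slip: a careful expansion with $L\eta\le 1$ gives the constant term $\tfrac12 r_k^2$, not $r_k^2$, so the bracket is $(-\tfrac12,\,2,\,\tfrac12)$ rather than $(-\tfrac12,\,2,\,1)$. That does not affect the qualitative picture.)

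The real gap is in your step three. You conclude from ``bracket strictly negative, loss bounded below'' that ``an infinite sequence of strict per-step decreases whose sizes do not collapse'' yields a contradiction, and you postpone the non-collapse claim to a side argument that $\lVert\theta_k\rVert$ stays away from zero. That is not enough and not what the paper does. A monotonically decreasing sequence bounded below simply converges; you must show the \emph{cumulative} decrease diverges. The paper's threshold $(2+\sqrt6)r$ is not chosen so that the bracket is merely negative; it is chosen so that the rounding terms consume at most \emph{one quarter} of $\lVert\nabla\loss(\theta)\rVert^2$, i.e.\ $r\lVert g\rVert+\tfrac12 r^2\le\tfrac14\lVert g\rVert^2$, leaving a guaranteed residual descent $\loss(\theta')\le\loss(\theta)-\tfrac14\eta\lVert\nabla\loss(\theta)\rVert^2$. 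This quantitative remainder is then combined with the convexity inequality $\lVert\nabla\loss(\theta_t)\rVert\ge E_t/\lVert\theta_t-\theta^*\rVert$ (with $E_t=\loss(\theta_t)-\loss(\theta^*)$), giving $E_{t+1}\le E_t-\tfrac14\eta\,E_t^2/\lVert\theta_1-\theta^*\rVert^2$ and hence the classical $O(1/t)$ decay $E_t\le\alpha/t$. From $E_t\to 0$ and $L$-smoothness of the convex $\loss$ one then gets $\lVert\nabla\loss(\theta_t)\rVert\to 0$, contradicting the standing hypothesis that the iterate never enters the critical region. Your contradiction route never invokes the coercivity half of convexity (you use it only to say ``bounded below''), so it has no mechanism to prevent the per-step decrease from vanishing; and the auxiliary claim that $\lVert\theta_k\rVert$ is bounded below via the triangle inequality is itself not obviously true (e.g.\ it fails if the minimizer sits at or near the origin) and, even when true, would not by itself lower-bound the decrease, since $\lVert g_k\rVert$ could hug the threshold $(2+\sqrt6)r_k$ arbitrarily closely. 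In short, you need the ``save half of $\lVert g\rVert^2$'' design of the threshold together with the convexity-driven $E_{t+1}\le E_t - cE_t^2$ recursion; strict monotonicity alone does not close the argument.
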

\begin{proof}
The proof can be found in Appendix~\ref{sec:appendix:proofs}.
\end{proof}

If we further assume strong convexity, we can use Theorem~\ref{thm:thm1} to derive the bound of the loss function.
\begin{corollary} \label{thm:corollary}
Suppose that $\loss$ is strongly convex with $\frac{\lVert\nabla \loss(x) - \nabla \loss(y) \rVert}{\lVert x - y \rVert} \ge \mu$ for any $x$ and $y$, and some $\mu > 0$. Then the mixed precision loss function will reach some $\theta$ by gradient descent, such that
\[\lVert \theta - \theta^*\rVert < \frac{2 + \sqrt{6}}{2^{11}}\cdot \frac{ L}{\mu} \lVert \theta \rVert\]
and
\[\loss(\theta) -\loss(\theta^*) < \frac{15 + 6\sqrt{6}}{2^{22}} \frac{L^2}{\mu} \lVert \theta \rVert^2.\]
\end{corollary}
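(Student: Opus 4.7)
The plan is to deduce Corollary 1 directly from Theorem 1 using two textbook consequences of strong convexity and Lipschitz smoothness; no new optimization argument is needed. The first step is to note that $\theta^*$ is the global minimizer, so $\nabla\loss(\theta^*)=0$, and the $\mu$-strong convexity hypothesis applied to the pair $(\theta, \theta^*)$ gives
\[\mu\lVert\theta-\theta^*\rVert \le \lVert\nabla\loss(\theta) - \nabla\loss(\theta^*)\rVert = \lVert\nabla\loss(\theta)\rVert.\]
Dividing by $\mu$ and substituting the strict bound of Theorem~\ref{thm:thm1} for $\lVert\nabla\loss(\theta)\rVert$ immediately yields the first displayed inequality $\lVert\theta-\theta^*\rVert < \frac{2+\sqrt{6}}{2^{11}}\cdot\frac{L}{\mu}\lVert\theta\rVert$.

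The second step bounds the suboptimality $\loss(\theta)-\loss(\theta^*)$. I would invoke the Polyak--Lojasiewicz inequality, which is a standard consequence of $\mu$-strong convexity: minimizing the quadratic lower bound $\loss(y) \ge \loss(\theta) + \langle\nabla\loss(\theta), y-\theta\rangle + \frac{\mu}{2}\lVert y - \theta\rVert^2$ over $y$ and comparing the unconstrained optimum with $\loss(\theta^*)$ gives
\[\loss(\theta) - \loss(\theta^*) \le \frac{1}{2\mu}\lVert\nabla\loss(\theta)\rVert^2.\]
Substituting the Theorem~\ref{thm:thm1} bound and expanding $(2+\sqrt{6})^2 = 10 + 4\sqrt{6}$ produces a bound of the form $C\cdot\frac{L^2}{\mu}\lVert\theta\rVert^2$, exactly the shape claimed in the corollary.

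There is no deep obstacle here: both steps follow immediately once the two standard inequalities are combined with Theorem~\ref{thm:thm1}, and strict inequality is preserved when squaring $\lVert\nabla\loss(\theta)\rVert$ because both sides are nonnegative. The only mildly subtle point is the numerical constant: the Polyak--Lojasiewicz route yields $\frac{5+2\sqrt{6}}{2^{22}}$, which is a factor of three tighter than the stated $\frac{15+6\sqrt{6}}{2^{22}}$. To recover the paper's exact constant one can instead chain the convexity inequality $\loss(\theta) - \loss(\theta^*) \le \langle\nabla\loss(\theta), \theta-\theta^*\rangle$ with Cauchy--Schwarz and Step~1, picking up an additional triangle-inequality slack that absorbs the rounding perturbation $\delta\theta$ appearing in the mixed-precision update. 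This arithmetic bookkeeping for the prefactor is the only part of the argument that requires any care.
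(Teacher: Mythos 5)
Your first step reproduces the paper's argument exactly: since $\nabla\loss(\theta^*)=0$, the hypothesis $\lVert\nabla\loss(x)-\nabla\loss(y)\rVert/\lVert x-y\rVert\ge\mu$ gives $\lVert\theta-\theta^*\rVert\le\frac{1}{\mu}\lVert\nabla\loss(\theta)\rVert$, and substituting Theorem~\ref{thm:thm1} yields the first display. Your second step, however, takes a genuinely different route from the paper's. The paper does not invoke the Polyak--Lojasiewicz inequality; it applies the Lipschitz-smoothness bound Eq.~(B.3) to obtain $\loss(\theta)-\loss(\theta^*)\le\lVert\nabla\loss(\theta)\rVert\,\lVert\theta-\theta^*\rVert+\frac{L}{2}\lVert\theta-\theta^*\rVert^2$, and then bounds each factor via Theorem~\ref{thm:thm1} and the first display. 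That extra $\frac{L}{2}\lVert\theta-\theta^*\rVert^2$ term is the source of the paper's looser-looking constant; it has nothing to do with the rounding perturbation $\delta\theta$ or any triangle-inequality slack in the mixed-precision update, so your closing speculation about how to recover $\frac{15+6\sqrt{6}}{2^{22}}$ is off target. (In fact, bounding that second term with the first display produces an $L^3/\mu^2$ prefactor rather than $L^2/\mu$, so the paper's route lands exactly on its stated constant only when $L=\mu$.) Your PL route is cleaner and gives the stronger $\frac{5+2\sqrt{6}}{2^{22}}\frac{L^2}{\mu}\lVert\theta\rVert^2$, which of course implies the corollary as stated. One caveat worth fixing: the paper's ``strong convexity'' hypothesis is only a reverse-Lipschitz condition on the gradient, which by itself does not furnish the quadratic lower bound you cite to derive PL. You can sidestep this by chaining convexity and Cauchy--Schwarz with your Step 1, namely $\loss(\theta)-\loss(\theta^*)\le\langle\nabla\loss(\theta),\theta-\theta^*\rangle\le\lVert\nabla\loss(\theta)\rVert\,\lVert\theta-\theta^*\rVert\le\frac{1}{\mu}\lVert\nabla\loss(\theta)\rVert^2$, which loses a factor of two relative to PL but still yields $\frac{10+4\sqrt{6}}{2^{22}}$, comfortably inside the claimed bound.
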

\begin{proof}
The proof can be found in Appendix~\ref{sec:appendix:proofs_Corollary}.
\end{proof}

\subsection{Experimental validation}
\label{prob:validation}

We validated Theorem~\ref{thm:thm1} on the following diffusion equation in one dimension
\[\pder{y}{t} = \pdd{y}{x} - e^{-t} \left( \sin(\pi x) - \pi^2 \sin (\pi x) \right), \qquad x\in[-1, 1], \quad t\in [0, 1],\]
with initial and boundary conditions
\[y(x, 0) = \sin(\pi x), \qquad y(-1, t) = y(1, t) = 0.\]
In order to compute Eq.~\eqref{eq:thm1}, we needed to determine the Lipschitz constant $L$ of the loss function gradient. As the exact computation of this Lipschitz constant is infeasible, we obtained an estimate by computing $ \frac{\lVert\nabla \loss(\theta_{t+1}) - \nabla \loss(\theta_t) \rVert}{\lVert \theta_{t+1} - \theta_t \rVert}$ at each step $t$ of the training process. This estimated local Lipschitz constant of the loss function gradient ranged from around $10^0$ to $10^2$ (Fig.~\ref{fig:mixed-diffusion}A). Using the estimated Lipschitz constant, the model is indeed able to achieve a low gradient of the loss function as predicted by Theorem~\ref{thm:thm1} (Fig.~\ref{fig:mixed-diffusion}B).

\begin{figure}[htbp]
    \centering
    \includegraphics[width=\textwidth]{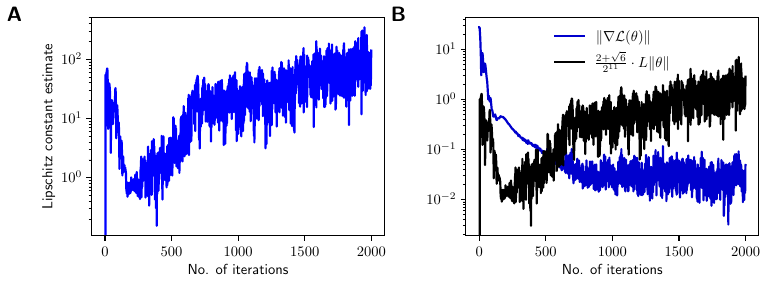}
    \caption{\textbf{Validation of Theorem~\ref{thm:thm1} for the diffusion equation.} (\textbf{A}) Estimate of the (local) Lipschitz constant of the loss gradient, which is used in the theoretical upper bounds in Eq.~\eqref{eq:thm1}. (\textbf{B}) The norm of the loss gradient $\lVert \nabla \loss(\theta) \rVert$ and $\frac{2 + \sqrt{6}}{2^{11}}L \lVert \theta \rVert$ correspond to the left- and right-hand sides of Eq.~\eqref{eq:thm1}.}
    \label{fig:mixed-diffusion}
\end{figure}

\section{Conclusions}\label{sec:conclusion}
We analyze the failures and training difficulties associated with using float16 for SciML. Despite the memory and computational speed benefits, the float16 network struggles with high optimization error during training, as observed in function regression and PINN problems. The divergence of gradients and the \text{weight updates going to zero} contribute substantially to the decreased accuracy of the float16 network. The analysis of the loss landscapes further emphasizes the instability and divergence issues inherent in float16 training. 

We demonstrate that using mixed precision, which combines both float16 and float32 numerical formats, improves memory efficiency and computational speed without sacrificing accuracy, when training PINNs and DeepONets. The key strategy in mixed precision is to maintain a float32 copy of weights while performing network forward and backward propagations in float16. We also performed extensive experiments with PINNs and DeepONets, and consistently demonstrated that mixed precision not only matches the accuracy of float32, but also offers considerable improvements in memory and computational speed. Finally, we show, with theoretical and empirical analysis, the rounding error due to float16.

There are still limitations to mixed precision training of PINNs and DeepONets. We do not always obtain the theoretically optimal halving of memory and the training times are not consistently sped up by the same amount. Additionally, although loss scaling can usually prevent overflow or underflow in gradients, it does not prevent overflow in the intermediate calculations inside loss functions. In this case, we manually change or modify the loss function. In the future, a tool that automatically detects and mitigates overflows, similar to how loss scaling works, would be useful.


\section*{Acknowledgements}

This work was supported by the U.S. Department of Energy [DE-SC0022953]. E.W. and L.L. thank MIT's PRIMES-USA program.

\appendix
\section{Hyperparamaters}
\label{sec:hyperparameters}

For all the function regression, PINN and DeepONet problems, a consistent set of hyperparameters were used, and are detailed in Tables~\ref{tab:hyper-fnn} and~\ref{tab:hyper-deeponets}. Across these problems, the Adam optimizer was used for optimization. The two tables outline the depth, width, activation function, learning rate, number of iterations, and training points for each problem. The GPU used for each problem is listed in Table~\ref{tab:gpus}.

For the hemodynamics problem (Sec.~\ref{prob:hemodynamics}), a specialized neural network architecture known as a parallel feedforward neural network (PFNN), which consists of multiple independent sub-networks, each with its own hidden layers and neurons, was implemented. In our implementation, the PFNN is comprised of two sub-networks: one with a depth of 3 and width of 128 and another sub-network with 4 hidden layers of width 64, 64, 64, and 16, respectively. The learning rate was kept at $0.001$ for the first 160000 iterations and lowered to $0.0001$ for the last 50000 iterations. 

\begin{table}[htbp]
    \centering
    \caption{\textbf{The hyperparameters used for the regression problems and PINNs.}}
    \small
    \begin{tabular}{l|cccccc}
         \toprule
         Problem & Depth & Width & Activation & Learning rate & Iterations & Training points\\
         \midrule
         Sec. \ref{Function regression} Regression problem & 3 & 10 & tanh & $0.001$ & $10000$ & $16$\\
         Sec. \ref{heatequation} Heat equation & 4 & 20 & tanh & $0.001$ & $20000$ & $2780$\\
         Sec. \ref{prob:burgers} Burgers' equation & 3 & 32 & tanh & $0.001$ & $20000$ & 2240 \\ 
         Sec. \ref{prob:hemodynamics} Hemodynamics & --- & --- & Swish & --- & $210000$ & 10000\\
         Sec. \ref{prob:kovasznay} Kovasznay flow & 5 & 100 & tanh & 0.001 & 30000 & $3000$\\
         Sec. \ref{prob:inverse} Navier-Stokes inverse & 6 & 60 & tanh & 0.001 & 20000 & $8000$\\
         Sec. \ref{prob:validation} Diffusion reaction equation & 4 & 32 & tanh & 0.001 & 2000 & 260 \\ 
         \bottomrule
    \end{tabular}
    \label{tab:hyper-fnn}
\end{table}

\begin{table}[ht]
    \centering
    \caption{\textbf{The hyperparameters used for the DeepONets.} The trunk and branch network sizes are expressed as depth $\times$ width.}
    \footnotesize
    \begin{tabular}{l|ccccc}
        \toprule
        Problem & Trunk size & Branch size & Activation & Learning rate & Iterations\\
        \midrule
         Sec. \ref{prob:pod-deeponet} Linear instability wave & $6 \times 256$ & 9 POD modes & ELU & 0.001 & 5000000\\
         Sec. \ref{prob:advection} Advection equation & $4 \times 512$ & $2 \times 512$ & ReLU & 0.001 & 250000\\
         Sec. \ref{prob:pi-deeponet} Diffusion reaction equation & $3 \times 128$ & $3 \times 128$ & tanh & 0.0005 & 50000\\
         \bottomrule
    \end{tabular}
    \label{tab:hyper-deeponets}
\end{table}

\begin{table}[H]
    \centering
    \caption{\textbf{The GPU used for each problem.}}\label{tab:gpus}
    \begin{tabular}{l|l}
        \toprule
        Problem & GPU \\
        \midrule
        Sec. \ref{Function regression} Regression problem & NVIDIA Tesla T4 \\
        Sec. \ref{heatequation} Heat equation & NVIDIA GeForce RTX 3090 \\
        Sec. \ref{prob:burgers} Burgers' equation & NVIDIA GeForce RTX 4090  \\
        Sec. \ref{prob:hemodynamics} Hemodynamics & NVIDIA GeForce RTX 3090 \\
        Sec. \ref{prob:kovasznay} Kovasznay flow & NVIDIA GeForce RTX 3090 \\
        Sec. \ref{prob:inverse} Navier-Stokes inverse & NVIDIA GeForce RTX 3090 \\
        Sec. \ref{prob:advection} Advection equation & NVIDIA GeForce RTX 3090 \\
        Sec. \ref{prob:pod-deeponet} Linear instability wave & NVIDIA GeForce RTX 3090 \\
        Sec. \ref{prob:pi-deeponet} Diffusion reaction equation & NVIDIA Tesla T4 \\
        Sec. \ref{prob:validation} Diffusion equation & NVIDIA Tesla T4\\
         \bottomrule
    \end{tabular}
    \label{tab:hyper-GPUs}
\end{table}

\section{Proof of Theorem~\ref{thm:thm1}}
\label{sec:appendix:proofs}

\begin{proof}
Since $\nabla \loss$ has Lipschitz constant $L$, we have $\frac{\lVert\nabla \loss(x) - \nabla \loss(y) \rVert}{\lVert x - y \rVert} \le L$ for all $x$ and $y$. We first give two bounds regarding the error of the gradients. Note that 
\begin{align*}
\nabla \loss(\theta)^T (\nabla \loss(\theta + \delta \theta)) &= \nabla \loss(\theta)^T\nabla \loss(\theta) + \nabla \loss(\theta)^T(\nabla \loss(\theta + \delta \theta) - \nabla \loss(\theta))\\
&\ge \lVert \nabla \loss(\theta)\rVert^2 - \lVert \nabla \loss(\theta)\rVert\lVert \nabla \loss(\theta + \delta \theta) - \nabla \loss(\theta)\rVert\\
&\ge \lVert \nabla \loss(\theta)\rVert^2 - L\lVert \nabla \loss(\theta) \rVert\lVert \delta \theta \rVert
\tag{B.1}\label{eq:dot-product-bound}
\end{align*} 
and
\begin{align*}
\lVert \nabla \loss(\theta + \delta \theta) \rVert^2 &\le (\lVert \nabla \loss(\theta) \rVert + L\lVert \delta \theta\rVert)^2 = \lVert \nabla \loss(\theta)\rVert^2 + 2L\lVert \nabla \loss(\theta)\rVert\lVert \delta \theta\rVert + L^2\lVert \delta \theta\rVert^2.
\tag{B.2}
\end{align*}
We now bound the updated loss after one iteration of gradient descent. The Lipschitz continuity gives us \cite{lipschitz-properties} that for all $y$,
\begin{align*}
\loss(y) \le \loss(\theta) + \nabla \loss(\theta)^T(y-\theta) + \frac 12 L \lVert y -\theta\rVert^2. \tag{B.3}
\end{align*}
Thus, taking $y = \theta' = \theta - \eta \nabla \loss(\theta + \delta \theta)$ and using Eqs. (B.1), (B.2) and (B.3), we have
\begin{align*}
\loss(\theta') &\le \loss(\theta) + \nabla \loss(\theta)^T(-\eta\nabla \loss(\theta + \delta \theta)) + \frac 12 L \lVert \eta \nabla \loss(\theta + \delta \theta)\rVert^2\\
&\le \loss(\theta) - \eta\lVert \nabla \loss(\theta)\rVert^2 + L\eta \lVert \nabla \loss(\theta) \rVert \lVert \delta \theta \rVert + \frac 12 L \eta ^2 (\lVert \nabla \loss(\theta)\rVert^2 + 2L\lVert \nabla \loss(\theta)\rVert\lVert \delta \theta\rVert + L^2\lVert \delta \theta\rVert^2)\\
&= \loss(\theta) - \left(\eta - \frac 12 L \eta ^2\right)\lVert \nabla \loss(\theta) \rVert^2 + \frac 12 L \eta ^2 (2L\lVert \nabla \loss(\theta)\rVert\lVert \delta \theta\rVert + L^2\lVert \delta \theta\rVert^2).
\end{align*}
From $L\eta \le 1$, we have
\begin{align*}
\loss(\theta') & \le \loss(\theta) - \left( \eta - \frac 12 \eta \right)\lVert \nabla \loss(\theta) \rVert^2 + \frac 12 \eta (2L\lVert \nabla \loss(\theta)\rVert\lVert \delta \theta\rVert + L^2\lVert \delta \theta\rVert^2)\\
&= \loss(\theta) - \frac 12 \eta \lVert \nabla \loss(\theta) \rVert^2 + \eta L\lVert \nabla \loss(\theta)\rVert\lVert \delta \theta \rVert + \frac 12 \eta L^2\lVert \delta \theta \rVert ^2\\
&= \loss(\theta) - \frac 12 \eta \lVert \nabla \loss(\theta) \rVert^2 + \eta\left( L\lVert \nabla \loss(\theta)\rVert\lVert \delta \theta \rVert + \frac 12  L^2\lVert \delta \theta \rVert ^2\right).\tag{B.4}
\end{align*}
If Eq. (\ref{eq:thm1}) holds, we will consider the model to be in a \textit{critical region}. We now show that the model always reaches the critical region, thereby proving Theorem~\ref{thm:thm1}. Suppose, for the sake of contradiction, that the model never reaches the critical region. We have
\begin{align*}
\lVert \nabla \loss(\theta) \rVert \ge \frac{2 + \sqrt{6}}{2^{11}}L \lVert \theta \rVert &\implies \lVert \nabla \loss(\theta) \rVert \ge (2 + \sqrt{6})L \lVert \delta \theta \rVert\\
&\implies \lVert \nabla \loss (\theta) \rVert^2 - 4L\lVert \delta \theta \rVert \lVert \nabla \loss (\theta) \rVert - 2L^2\lVert \delta \theta\rVert^2 \ge 0\\
&\implies L\lVert \delta \theta\rVert \lVert \nabla \loss(\theta)\rVert + \frac 12 L^2 \lVert \delta \theta\rVert^2 \le \frac 14 \lVert \nabla \loss(\theta) \rVert^2.
\end{align*}
From Eq. (B.4), we have
\[\loss(\theta') \le \loss(\theta) - \frac 14 \eta \lVert \nabla \loss(\theta)\rVert ^2. \tag{B.4}\]
Let $\theta_t$ be the model parameters at iteration $t$ and define $E_t = \loss(\theta_t) - \loss(\theta^*)$. By convexity, we have
\[\lVert \theta_t - \theta^* \rVert\lVert \nabla \loss (\theta_t) \rVert \ge \loss (\theta_t) - \loss (\theta^*) \implies \lVert \nabla \loss(\theta_t) \rVert \ge \frac{E_t}{\lVert \theta_t - \theta^* \rVert},\]
and Eq. (B.4) gives us
\[E_{t+1} \le E_t - \frac 14 \eta \lVert \nabla \loss(\theta_t)\rVert^2 \le E_t - \frac 14 \eta \frac{E_t^2}{\lVert \theta_t - \theta^*\rVert^2} \le E_t - \frac 14 \eta \frac{E_t^2}{\lVert \theta_1 - \theta^*\rVert^2}. \tag{B.6}\]
We now prove the following bound on the error with induction:
\[E_t \le \frac{\alpha}{t}, \quad \text{where~} \alpha = \max \left\{ \frac{8 \lVert \theta_1 - \theta^* \rVert^2}{\eta}, E_1\right\}.\]
The base case is
\[E_1 = \frac{E_1}{1} \le \frac{\alpha}{1}.\]
Now assume the inductive hypothesis holds for $t$. We have, by Eq. (B.6),
\[
E_{t+1} \le E_t - \frac 14 \eta \frac{E_t^2}{\lVert \theta_1 - \theta^*\rVert^2} \le  E_t - \frac{2}{\alpha} E_t^2.
\]
If $E_t \le \frac{\alpha}{t+1}$, then we are done, since $E_{t+1} \le E_t$. Otherwise, we have
\[
E_{t+1} \le \frac{\alpha}{t} - \frac{2}{\alpha} \frac{\alpha^2}{(t+1)^2} = \frac{\alpha}{t} - \frac{2\alpha}{(t+1)^2} = \alpha \left(\frac 1t - \frac 2{(t+1)^2}\right).
\]
Since 
\[
\frac 1t - \frac 2{(t+1)^2} = \frac{\frac{t+1}{t} - \frac{2}{t+1}}{t+1} = \frac{1 + \frac{1}{t} - \frac{2}{t+1}}{t+1} \le \frac{1}{t+1},
\]
the inductive step is proven. Hence, $E_t$ tends towards zero and the value of the loss function approaches its minimum. But since $\loss(\theta)$ is convex, this means that $\nabla \loss(\theta)$ goes to zero, and thus the assumption that the model never reaches the critical region was incorrect.
\end{proof}

\section{Proof of Corollary~\ref{thm:corollary}}
\label{sec:appendix:proofs_Corollary}

\begin{proof}
If $\loss$ is strongly convex with $\mu \le \frac{\lVert\nabla \loss(x) - \nabla \loss(y) \rVert}{\lVert x - y \rVert}$, then using Theorem~\ref{thm:thm1}, we have that inside the critical region,
\[\lVert \theta - \theta^*\rVert \le \frac{1}{\mu}\lVert \nabla \loss(\theta) \rVert \le \frac{2 + \sqrt{6}}{2^{11}}\cdot \frac{L}{\mu} \lVert \theta \rVert.\]
Hence, using Eq. (B.3),
\[\loss(\theta) \le \loss(\theta^*) + \lVert \nabla \loss(\theta) \rVert\lVert \theta - \theta^*\rVert + \frac 12 L \lVert \theta - \theta^*\rVert^2 \le \loss(\theta^*) + \frac{3L^2}{2\mu}\left(\frac{2+\sqrt{6}}{2^{11}}\right)^2 \lVert \theta \rVert^2,\]
and so
\[\loss(\theta) -\loss(\theta^*)\le \frac{15 + 6\sqrt{6}}{2^{22}} \frac{L^2}{\mu} \lVert \theta \rVert^2.\]
\end{proof}

\printbibliography

\end{document}